\definecolor{Gray}{gray}{0.9}
\definecolor{midgreen}{rgb}{0.1,0.5,0.1}
\definecolor{darkgray}{gray}{0.25}
\definecolor{lightblue}{rgb}{0.25,0.25,0.8}
\definecolor{mydarkblue}{rgb}{0,0.08,0.45}
\newtheorem{definition}{Definition}
\newtheorem{corollary}{Corollary}
\newtheorem{theorem}{Theorem}
\newtheorem{lemma}{Lemma}
\newcommand{\email}[1]{\href{mailto:#1}{\color{black} \texttt{#1}}}
\title{SubGen: Token Generation in Sublinear Time and Memory}
\author{
Amir Zandieh\footnote{Equal Contributions} \\ Independent Researcher \\ \email{amir.zed512@gmail.com} 
\and
Insu Han$^{*}$ \\ Yale University \\ \email{insu.han@yale.edu}
\and Vahab Mirrokni \\ Google Research \\ \email{mirrokni@google.com} 
\and Amin Karbasi \\ Yale University, Google Research \\ \email{amin.karbasi@yale.edu} 
}
\date{}
\newcommand{\alg}{\textsc{SubGen}{}}
\newcommand{\norm}[1]{\ensuremath{\left\| #1 \right\|}}
\newcommand{\normnlr}[2]{\|#1\|_{#2}}
\newcommand{\poly}{\text{poly}}
\def\argmin{\mathop{\rm arg~min}}
\def\0{{\bm 0}}
\def\k{{\bm k}}
\def\q{{\bm q}}
\def\v{{\bm v}}
\def\x{{\bm x}}
\def\y{{\bm y}}
\def\z{{\bm z}}
\def\A{{\bm A}}
\def\B{{\bm B}}
\def\K{{\bm K}}
\def\M{{\bm M}}
\def\V{{\bm V}}
\def\Ccal{\mathcal{C}}
\def\Dcal{\mathcal{D}}
\def\Mcal{\mathcal{M}}
\def\Scal{\mathcal{S}}
\def\RR{\mathbb{R}}
\def\Nbb{\mathbb N}
\def\Attn{\mathrm{Attn}}
\def\softmax{\mathtt{softmax}}
\begin{document}

\maketitle
\vspace{-0.4in}
\begin{abstract}
Despite the significant success of large language models (LLMs), their extensive memory requirements pose challenges for deploying them in long-context token generation. 
The substantial memory footprint of LLM decoders arises from the necessity to store all previous tokens in the attention module, a requirement imposed by key-value (KV) caching.
In this work, our focus is on developing an efficient compression technique for the KV cache.
Empirical evidence indicates a significant clustering tendency within key embeddings in the attention module. 
Building on this key insight, we have devised a novel caching method with sublinear complexity, employing online clustering on key tokens and online $\ell_2$ sampling on values. The result is a provably accurate and efficient attention decoding algorithm, termed \alg{}. 
Not only does this algorithm ensure a sublinear memory footprint and sublinear time complexity, but we also establish a tight error bound for our approach.
Empirical evaluations on long-context question-answering tasks demonstrate that \alg{} significantly outperforms existing and state-of-the-art KV cache compression methods in terms of performance and efficiency.
\end{abstract}

\section{Introduction} \label{sec-intro}

Large Language Models (LLMs) \cite{achiam2023gpt, touvron2023llama}
play a crucial role in various natural language processing applications, including dialog systems \cite{taori2023stanford, chiang2023vicuna}, coding assistance \cite{chen2021evaluating, roziere2023code}, and image/video generations from text~\cite{radford2021learning,ho2022imagen}.
All of these models rely on the transformer architecture, with the attention mechanism serving as the key component.

To fully harness the capabilities of LLMs, they must demonstrate both efficiency and accuracy in generating long sequences. 
In practical applications, deploying LLMs to generate tokens in an autoregressive manner involves a sequential decoding process, where attention is dynamically applied to each newly generated token. This process effectively constructs the output sequence in a streaming manner, one token at a time.
Therefore,  as the sequence grows, the model has to produce contextually relevant and coherent content.

%To unlock the full potential of LLMs, it is essential for them to efficiently and accurately perform long sequence generation. Deploying LLMs in practice involves sequentially decoding attention for each newly generated token in a streaming fashion as the output sequence is generated token by token.

A common method for autoregressive attention decoding involves the use of key-value (KV) caching, where key and value pairs from \textit{all} preceding tokens are cached and reused to prevent redundant computations. However, this approach faces memory constraints, particularly when handling long sequences. In particular, the memory requirements and runtime for generating each new token increase linearly with context size, posing a significant challenge for efficient processing of extensive sequences. This linear scaling directly impedes practical applicability in real-world scenarios, such as chat systems, where large contexts are often encountered.

%As a result, the memory and  This becomes particularly challenging when scaling models to accommodate the generation of extensive responses in chat systems. %The substantial memory requirement for KV caching can limit the model's scalability and efficiency in handling long-form interactions.

%A common and intuitive approach for attention decoding is through caching the Key and Value pairs of all previous tokens, known as key-value (KV) caching. 
%Despite its effectiveness, the KV caching method demands substantial memory for storing all key and value tokens in long sequences, presenting challenges when scaling models to handle the generation of lengthy responses in chat systems.

In this work, we delve into the primary computational and memory bottleneck of token generation. %namely the memory footprint of the KV cache. 
We propose \alg, a novel approach designed to significantly reduce the memory and runtime complexity of token generation, moving from conventional linear growth to sublinear scale.
%The KV cache, responsible for storing key-value pairs of previous tokens, plays a critical role in the overall complexity of transformer architectures. 
%We address the computational bottlenecks associated with the linear growth of the KV cache, introducing an innovative algorithm that optimizes memory usage without compromising the quality of attention mechanisms. Our sublinear algorithm makes attention decoding scalable for applications demanding larger contexts.
To summarize, our goal is to answer the following question: 
\begin{center}
   \emph{Can we approximate the attention output in decoding phase \\in sublinear space/time complexity in context length?} 
\end{center}

% \section{Attention Decoding (Deployment) Problem}
% \paragraph{Attention Decoding (Deployment) Problem.}
% \amin{it is better to move the remain parts to the background section} 
% Deployment of LLMs involves performing attention decoding in a streaming fashion. 

\subsection{Related Work} \label{sec-related-work}

% Language models scale poorly on input sequence length. To overcome this, there have been rich literature developing efficient attention mechanisms~\cite{wang2020linformer,gu2023mamba} or approximate algorithms~\cite{kitaev2019reformer,zaheer2020big,daras2020smyrf,zandieh2023kdeformer,han2023hyperattention}.
Recent studies have underscored the need for efficient token generation, particularly with the rise of long-range context datasets. Several recent works have developed efficient strategies for compressing the KV cache.
\citet{zhang2023h} proposed a greedy-type eviction algorithm that dynamically keeps at most $k \ll n$ token embeddings based on the accumulated attention scores where they refer to the Heavy Hitter Oracle (H2O).
% \citet{kim2022learned} proposed to remove redundant tokens based on attention scores.  . They further showed that the accumulated attention scores of all the tokens within attention blocks follow a power-law distribution.
\citet{liu2023scissorhands} empirically observed that 
tokens with initially high attention scores tend to stay high during the future generation process. Motivated by this observation, the authors proposed a strategy that only keeps the most recent and pivotal tokens whose attention scores are higher than a threshold.
\citet{ge2023model} proposed an adaptive method of KV cache compression which identifies the intrinsic structures of attention heads and uses them to determine the optimal compression policy. 
\citet{xiao2023efficient} observed that a simple eviction mechanism that keeps only the first few and last few tokens does not degrade much the decoding quality. They additionally proposed a fine-tuning method to solve performance degradation from their method.
\citet{liu2023deja} developed an algorithm that reduces the generation latency by exploiting contextual sparsity.
In addition to algorithmic acceleration, there has also been a line of work optimizing hardware resource configurations~\cite{sheng2023flexgen,hong2023flashdecoding}.
However, to the best of our knowledge, none of these works have achieved an efficient method for KV cache with fully sublinear-time memory space. 
% Note use of \abovespace and \belowspace to get reasonable spacing
% above and below tabular lines.

On the lower bound side, achieving subquadratic amortized runtime for producing output embeddings for $n$ tokens in the worst-case instances is likely impossible without making assumptions about the input tokens~\cite{alman2023fast, sarlos2023hardness}. 
Therefore, to achieve fast runtime, it is necessary to rely on certain assumptions about the input tokens.

\subsection{Streaming Attention Problem}
Deployment of LLMs involves performing attention decoding in a streaming fashion. 
More precisely, the stream of tokens is represented as a sequence of vector triplets $(\q_1, \k_1, \v_1), (\q_2, \k_2, \v_2), \ldots (\q_n, \k_n, \v_n)$, where $\q_i, \k_i, \v_i \in \RR^d$ are queries, keys, and values of the attention mechanism and $n$ is the total number of tokens in the stream so far either in prompt or generation. \footnote{We denote vectors with lowercase boldface letters, e.g., $\v$, matrices with uppercase boldface letters, e.g., $\M$, and sets with calligraphy uppercase letters, e.g., $\Scal$. The operator norm of a matrix is denoted as $\norm{\cdot}_{{op}}$.
}
The objective of streaming attention decoding is to compute the following:
\begin{equation}\label{eq:streaming_attention_exact}
    \Attn(\q_n, \K_n, \V_n) = \softmax(\K_n \cdot \q_n)^\top \cdot \V_n,
\end{equation}
where $\K_n, \V_n \in \RR^{n \times d}$ are matrices defined by stacking the keys and values in their respective rows:
\begin{equation}\label{eq:KV_matrices}
\K_n := \begin{bmatrix}
\k_1^\top \\
\k_2^\top \\
\vdots \\
\k_n^\top
\end{bmatrix}, ~~~ 
\V_n := \begin{bmatrix}
\v_1^\top \\
\v_2^\top \\
\vdots \\
\v_n^\top
\end{bmatrix}.
\end{equation}
The output $\Attn(\q_n, \K_n, \V_n)$ is then used for predicting the next token and its token embedding is applied to a transformer model and a new stream pair $(\q_{n+1}, \k_{n+1}, \v_{n+1})$ is generated.
However, storing these values and keys requires $O(n d)$ memory, posing a significant space complexity challenge for long-context models with large $n$.

\subsection{Overview of Contributions}
We propose \alg{}, an efficient method that accurately approximates the attention decoder's output in \cref{eq:streaming_attention_exact} while retaining only a small (sublinear) subset of keys and values in the cache. 
In particular, \alg{} computes an estimator $\z_n$ for $\Attn(\q_n, \K_n, \V_n)$ in sublinear time and memory such that the error is bounded as follows:
\begin{align}
&\norm{\z_n - \Attn(\q_n, \K_n, \V_n)}_2 \le \varepsilon \norm{\softmax(\K_n \cdot \q_n)}_2 \norm{\V_n}_{op}. \label{eq:error_bound_intro}
\end{align}
This error bound is in line with the spectral errors studied in previous works \cite{zandieh2023kdeformer, han2023hyperattention}.

We begin by observing that $\Attn(\q_n, \K_n, \V_n)$ in \cref{eq:streaming_attention_exact} is the product of the softmax vector $\softmax(\K_n \cdot \q_n)$ and value matrix $\V_n$.
This matrix-vector product can be approximated by sub-sampling only $O(\varepsilon^{-2} d \log n)$ key-value pairs according to the vector and matrix according to the squared norms of value tokens. 
This can be implemented in a streaming setting using some variants of reservoir sampling.

The other major computational challenge is computing the partition function in the denominator of the softmax function, i.e., $\sum_{i\in[n]} \exp(\langle \k_i, \q_n \rangle)$. 
To solve this, we construct a data structure that can be stored in sublinear memory and efficiently approximate $\sum_{i\in[n]} \exp(\langle \k_i, \q_n \rangle)$ up to $1 \pm \varepsilon$ factor for any query $\q_n$.
Our method assumes that the key tokens can be covered by a sublinear number of bounded diameter clusters.
This assumption is indeed weaker than the one made in \cite{han2023hyperattention}, which in the decoding setting translates to having key tokens belong to only one cluster with a bounded diameter, while our approach allows for any sublinear number of clusters.
So, if the keys are composed of bounded diameter clusters then we only need a small number of uniformly sampled keys from each cluster to approximate the softmax normalizer efficiently and accurately.
The central task is to find these clusters in a streaming setting, and we achieve this using an algorithm that is inspired by the streaming k-center algorithm of \cite{charikar1997incremental}.

In \cref{main_theorem} and \cref{main_cor} we demonstrate that if the keys can be clustered into some sublinear number $m = n^{1 - \Omega(1)}$ of clusters with some bounded diameters, then \alg{} operates with sublinear $O\left( \varepsilon^{-2} m d \right) = O\left( \varepsilon^{-2} d n^{1-\Omega(1)} \right)$ memory and runtime and its output satisfies the approximation guarantee in \cref{eq:error_bound_intro}.

In \cref{sec-exp}, we empirically compare \alg{} to other KV cache compression methods including the attention-score-based algorithm of \cite{zhang2023h} and the deterministic eviction policy from \cite{xiao2023efficient}. 
Our results confirm that \alg{} outperforms these methods, particularly in question-answering tasks with various sequence lengths.

\section{Sublinear Time and Memory Algorithm} \label{alg-sec}

Our goal is to approximate the attention output in \cref{eq:streaming_attention_exact} with a space complexity that is sublinear in context length $n$. To achieve this objective, we aim to design the following data structure (DS) for efficiently approximating the streaming attention mechanism:

\subsection{Streaming Attention Data Structure}
For every positive integer $n$ and every stream of token triplets $(\q_1, \k_1, \v_1), (\q_2, \k_2, \v_2), \ldots (\q_n, \k_n, \v_n)$ where $\q_i, \k_i, \v_i \in \RR^d$, we aim to construct an efficient DS
% named \algstreamatt{} 
with the following properties:
\begin{itemize}
    % \item The memory required to store \algstreamatt{} is sublinear in $n$, i.e., $o(n)$.
    \item The required memory space is sublinear in $n$, i.e., $o(n)$.
    % \item Upon the arrival of a new triplet $(\q_{n+1}, \k_{n+1}, \v_{n+1})$ in the stream, the time complexity to update and maintain \algstreamatt{} is sublinear in $n$, i.e., $o(n)$.
    \item Upon the arrival of a new triplet $(\q_{n+1}, \k_{n+1}, \v_{n+1})$ in the stream, the time complexity to update is sublinear in $n$, i.e., $o(n)$.
    % \item Given the data structure \textsc{StreamAttnDS}, there exists an algorithm that outputs an estimator $\z_n \in \RR^d$ in sublinear time $o(n)$ such that:
    \item Given such data structure, there exists an algorithm that outputs an estimator $\z_n \in \RR^d$ in sublinear time $o(n)$ such that:
    \begin{align}
    &\norm{\z_n - \softmax(\K_n \cdot \q_n)^\top \cdot \V_n}_2 \le \varepsilon \norm{\softmax(\K_n \cdot \q_n)}_2 \norm{\V_n}_{op}.
    \end{align}
\end{itemize}

In the rest of this section, our focus is on developing an algorithm to satisfy the above properties.
% \textsc{StreamAttnDS}, aiming 
% to meet the above properties.
Note that the attention output in \cref{eq:streaming_attention_exact}, using the definition of softmax, is equivalent to the following expression:
\[
\Attn(\q_n, \K_n, \V_n) = \frac{\exp(\K_n \cdot \q_n)^\top \cdot \V_n}{\sum_{i\in[n]} \exp(\langle \k_i, \q_n \rangle)}.
\]
Thus, to compute the attention output we need to calculate: 
\begin{enumerate}
    \item The matrix-vector product between $\V_n$ and $\exp(\K_n\cdot\q_n)$. 
    \item The partition function $\sum_{i\in[n]} \exp(\langle \k_i, \q_n \rangle)$.
\end{enumerate}
%{\bf (1)} the partition function $\sum_{i\in[n]} \exp(\langle \k_i, \q_n \rangle)$ in the denominator above and {\bf (2)} the matrix-vector product between $\V_n$ and $\exp(\K_n \cdot \q_n)$. 
Thus, our DS needs to efficiently approximate each of these two operations. 
The matrix-vector product $\exp(\K_n \cdot \q_n)^\top \cdot \V_n$ can be approximated efficiently using standard sampling-based techniques. 
Specifically, we make use of the row norm sampling approach~\cite{drineas2001fast, cohen2016optimal}. When multiplying two matrices $\A\in\RR^{m\times n}$ and $\B \in \RR^{n \times p}$, we randomly sample an i.i.d. index $i \in [n]$ with probability proportional to the $\ell_2$ norm of the $i$-th row in $\B$. Then, we estimate $\A\cdot\B$ by the average of the product between $i$-th column in $\A$ and $i$-th row in $\B$. With this approximation, we need only $O(\varepsilon^{-2} d \log n)$ samples to guarantee an $\varepsilon$ multiplicative error in spectral norm for $\exp(\K_n \cdot \q_n)^\top \cdot \V_n$.
Luckily, it can be implemented in a streaming setting through a variant of reservoir sampling~\cite{vitter1985random}.

The more challenging task is the sublinear-time approximation of the partition function $\sum_{i\in[n]} \exp(\langle \k_i, \q_n \rangle)$.
We construct a DS for computing this under the assumption that the keys in the token stream are organized into a sublinear ($o(n)$) number of clusters. 
To be more precise, we introduce the following notion of clusterability: 
\begin{definition}[Clusterability]\label{def_clusterable}
    For a positive integer $m$ and a real-valued $\delta > 0$, a dataset of points $\x_1, \x_2, \ldots \x_n \in \RR^d$ is considered {\it $(m, \delta)$-clusterable} if there exists a size-$m$ partition $\Ccal_1, \Ccal_2, \ldots \Ccal_m \subseteq \{ \x_i \}_{i=1}^n$ of the dataset satisfying the following conditions:
    \begin{itemize}
        \item $\Ccal_i \cap \Ccal_j = \emptyset$ for every $i \neq j$ and $\bigcup_{j=1}^m \Ccal_j = \{ \x_i \}_{i=1}^n$.
        \item for every $j \in [m]$ and every distinct pair $\y, \z \in \Ccal_j$, $\norm{\y - \z}_2 \le \delta$.
    \end{itemize}
\end{definition}

We demonstrate that under the assumption that the stream of keys $\k_1, \k_2, \ldots \k_n$ is $(m,\delta)$-clusterable as defined in \cref{def_clusterable}, with the number of clusters scaling sublinearly in stream length ($m = o(n)$), it is possible to construct a DS with sublinear memory space. %\textsc{StreamAttnDS}.
The procedure for this DS is presented in \cref{alg_stresm_attn_ds} which we refer to as \alg.
% \amin{algorithm 1 is called \alg{} but we are talking about \algstreamatt}

To verify this in the practical settings, we plot key embeddings from open-source LLMs in \cref{exp-key-embedding} and observe that they are indeed well clusterable on their embedding space. 
%On the other hand, the value
This motivates us to utilize an efficient stream clustering algorithm on key embeddings.
In the remainder of this section, we provide a detailed explanation for the execution of the algorithm while simultaneously analyzing it through a series of lemmas.

\begin{algorithm}[!h]
\caption{\alg: Sublinear Streaming Attention} \label{alg_stresm_attn_ds}
\begin{algorithmic}[1] % The number tells where the line numbering should start
    \STATE {\bf inputs:} stream of tokens $(\q_n, \k_n, \v_n)$ for $n \in \Nbb$, parameter $\delta > 0 $, positive integers $s, t$
    \STATE initialize $\mu \gets 0$, $\Dcal \gets \emptyset$, $\Mcal \gets \begin{bmatrix} {\tt null}, \stackrel{\times s}{\cdots\cdots} \end{bmatrix}$
    \REPEAT
    \STATE $\Dcal \gets \textsc{UpdateSoftmaxNormalizer}(\Dcal, \delta, t, \k_n)$
    \STATE $\Mcal \gets \textsc{UpdateMatrixProduct}(\Mcal, s, \mu, \k_n, \v_n)$
    \STATE $\mu \gets \mu + \norm{\v_n}_2^2$ \label{line_mu}
    \STATE $\z_n \gets \textsc{QueryStreamAttn}(\Dcal, \Mcal, s, t, \mu, \q_n)$
    \STATE $n \gets n+1$
    \STATE {\bf output} $\z_n$
    \UNTIL{Token stream ends}
    \vspace{0.02in}
    {\vspace{1mm} \hrule \vspace{1mm}}
    \vspace{0.02in}
    % \setalglineno{1}
    \hspace*{-0.5cm}{\bf Procedure}{ \sc UpdateSoftmaxNormalizer ($\Dcal, \delta, t, \k$)}
    \STATE Suppose input set $\Dcal = \{ (\x_i, \Scal_i, n_i) : i \in [m] \}$
    \STATE $i^* \gets \argmin_{i \in [m]} \norm{\x_i - \k}_2$
    \IF{$\norm{\k - \x_{i^*}}_2 \le \delta$}
    \STATE $n_{i^*} \gets n_{i^*}+1$
    \STATE Suppose $\Scal_{i^*}$ is a list of $t$ vectors in $\RR^d$
    \FOR{$j \in [t]$}
    \STATE Flip a coin and with probability $p=\frac{1}{n_{i^*}}$, update the $j^{th}$ entry of $\Scal_{i^*}$ as $\Scal_{i^*}(j) \gets \k$
    \ENDFOR
    \ELSE
    \STATE $\Scal' \gets \begin{bmatrix} \k, \stackrel{\times t}{\cdots\cdots} \end{bmatrix}$ (contains $t$ copies of $\k$)
    \STATE $\Dcal = \Dcal \cup \{ (\k, \Scal', 1) \}$
    \ENDIF
    \STATE {\bf return} $\Dcal$

    \vspace{0.02in}
    {\vspace{1mm} \hrule \vspace{1mm}}
    \vspace{0.02in}
    % \setalglineno{1}

    \hspace*{-0.5cm}{\bf Procedure}{ \sc UpdateMatrixProduct ($\Mcal, s, \mu, \k, \v$)}
    \STATE Suppose $\Mcal$ is a list of $s$ tuples of vectors in $\RR^d$
    \FOR{$i \in [s]$}
    \STATE Flip a coin and with probability $p=\frac{\norm{\v}_2^2}{\mu+\norm{\v}_2^2}$, update the $i^{th}$ entry of $\Mcal$ as $\Mcal(i) \gets (\k, \v)$
    \ENDFOR
    \STATE {\bf return} $\Mcal$

    \vspace{0.02in}
    {\vspace{1mm} \hrule \vspace{1mm}}
    \vspace{0.02in}
    % \setalglineno{1}

    \hspace*{-0.5cm}{\bf Procedure}{ \sc QueryStreamAttn ($\Dcal, \Mcal, s, t, \mu, \q$)}       
    \STATE $\z \gets \sum_{(\k, \v) \in \Mcal} \frac{\mu}{s \cdot \norm{\v}_2^2} \cdot \exp(\langle \q, \k \rangle) \cdot \v$
    \STATE $\tau \gets \sum_{(\x, \Scal, n') \in \Dcal} \frac{n'}{t} \cdot \sum_{\k \in \Scal} \exp(\langle \q, \k \rangle)$
    \STATE {\bf return} $\z / \tau$
\end{algorithmic}
\end{algorithm}

\subsection{Matrix Product Data Structure}
Here, we focus on the \textsc{UpdateMatrixProduct} primitive and establish its correctness by introducing invariants that are maintained throughout the stream processing.
This primitive maintains and updates a list of $s$ elements denoted by $\Mcal$ in \alg{}~(\cref{alg_stresm_attn_ds}). 
Initially, this list is filled with ${\tt null}$ values. After processing the first token tuple $(\q_1, \k_1, \v_1)$, this list is populated with $s$ copies of the first key and value $(\k_1, \v_1)$. 
The procedure \textsc{UpdateMatrixProduct} performs a variant of reservoir sampling upon observing any new token in the stream. 
At any iteration $n$ of the stream, $\Mcal$ is ensured to contain $s$ i.i.d. samples chosen at random from $(\k_1, \v_1), (\k_2, \v_2), \ldots (\k_n, \v_n)$ with probabilities proportional to $\norm{\k_i}_2^2$. 
More precisely, the following invariants hold:
\begin{lemma}[Correctness of \textsc{UpdateMatrixProduct}]\label{lem_update_matrix_prod}
    For any positive integer $s$, at any iteration $n$ of the stream in \ref{alg_stresm_attn_ds} the following properties are maintained:
    \begin{itemize}
        \item $\mu = \sum_{i \in [n]} \norm{\v_i}_2^2$.
        \item $\Mcal$ is a list of $s$ i.i.d. samples from $\{(\k_1, \v_1), (\k_2, \v_2), \ldots (\k_n, \v_n)\}$ where the probability distribution for each element $j \in [s]$ is $\Pr[\Mcal(j) = (\k_i, \v_i)] = \frac{\norm{\v_i}_2^2}{\sum_{l \in [n]} \norm{\v_l}_2^2}$ for $i \in [n]$.
    \end{itemize}
\end{lemma}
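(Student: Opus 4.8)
The plan is to prove both invariants simultaneously by induction on the stream length $n$. The first invariant, $\mu = \sum_{i \in [n]} \norm{\v_i}_2^2$, is the easy part: line~\ref{line_mu} of \cref{alg_stresm_attn_ds} increments $\mu$ by $\norm{\v_n}_2^2$ after each token is processed, and $\mu$ is initialized to $0$, so a trivial induction establishes it. The subtle point I must track is \emph{timing}: when \textsc{UpdateMatrixProduct} is called on the $n$-th token it receives the \emph{current} value of $\mu$, which at that moment equals $\sum_{i \in [n-1]} \norm{\v_i}_2^2$ (the update on line~\ref{line_mu} happens afterwards). Hence the acceptance probability $p = \frac{\norm{\v_n}_2^2}{\mu + \norm{\v_n}_2^2}$ used inside the procedure is exactly $\frac{\norm{\v_n}_2^2}{\sum_{i \in [n]} \norm{\v_i}_2^2}$, which is the key identity that makes the reservoir sampling correct.

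For the second invariant I would fix a single slot $j \in [s]$ and argue that its distribution is correct, since the slots are updated independently by independent coin flips. The base case $n=1$ holds because the slot is set to $(\k_1, \v_1)$ with probability $p = \frac{\norm{\v_1}_2^2}{0 + \norm{\v_1}_2^2} = 1$. For the inductive step, assume that after processing $n-1$ tokens we have $\Pr[\Mcal(j) = (\k_i, \v_i)] = \frac{\norm{\v_i}_2^2}{\sum_{l \in [n-1]} \norm{\v_l}_2^2}$ for each $i \in [n-1]$. Upon processing token $n$, slot $j$ is overwritten with $(\k_n, \v_n)$ with probability $p_n = \frac{\norm{\v_n}_2^2}{\sum_{l \in [n]} \norm{\v_l}_2^2}$, and otherwise retains its previous contents. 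I would then compute the two resulting cases: for $i = n$ the probability is exactly $p_n = \frac{\norm{\v_n}_2^2}{\sum_{l \in [n]} \norm{\v_l}_2^2}$, as required; for each $i \in [n-1]$ the probability is the previous value times the no-overwrite probability,
\[
\frac{\norm{\v_i}_2^2}{\sum_{l \in [n-1]} \norm{\v_l}_2^2} \cdot (1 - p_n) = \frac{\norm{\v_i}_2^2}{\sum_{l \in [n-1]} \norm{\v_l}_2^2} \cdot \frac{\sum_{l \in [n-1]} \norm{\v_l}_2^2}{\sum_{l \in [n]} \norm{\v_l}_2^2} = \frac{\norm{\v_i}_2^2}{\sum_{l \in [n]} \norm{\v_l}_2^2},
\]
where the telescoping cancellation of $\sum_{l \in [n-1]} \norm{\v_l}_2^2$ is precisely what yields the correct normalization at time $n$. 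This confirms the claimed distribution for slot $j$ at step $n$.

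Finally I would note that the i.i.d.\ claim follows because the $s$ slots are governed by $s$ independent coin flips at every step and started from identical initial conditions, so their joint law factorizes into identical marginals. The main obstacle I anticipate is not any hard computation but rather being careful with the off-by-one bookkeeping around $\mu$: one must verify that the value of $\mu$ passed into \textsc{UpdateMatrixProduct} reflects only the first $n-1$ tokens, so that $p_n$ denominator correctly telescopes against the inductive hypothesis. Getting this timing wrong would shift every probability by one index and break the normalization, so that is the step I would state most explicitly.
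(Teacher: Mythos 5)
Your proposal is correct and follows essentially the same route as the paper's proof: a trivial induction for the $\mu$ invariant, and per-slot induction with the telescoping cancellation $\frac{\norm{\v_i}_2^2}{\sum_{l \in [n-1]} \norm{\v_l}_2^2}\cdot(1-p_n)=\frac{\norm{\v_i}_2^2}{\sum_{l \in [n]} \norm{\v_l}_2^2}$ for the sampling distribution. Your explicit attention to the timing of the $\mu$ update relative to the call to \textsc{UpdateMatrixProduct} is a point the paper handles only implicitly, but it is the same argument.
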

\begin{proof}
    The first property is trivial because $\mu$ is initialized at zero and is updated in line 6 of the algorithm by adding the squared norms of $\v_i$'s.
    The proof of the second invariance is by induction.
    The base of induction holds for $n=1$ because after processing the first token by procedure \textsc{UpdateMatrixProduct} we have $\Pr[\Mcal(j) = (\k_1, \v_1)] = \frac{\norm{\v_1}_2^2}{\norm{\v_1}_2^2}=1$ for $j \in [s]$.

    Now suppose that the inductive hypothesis holds for $n$ and we prove it must also hold for $n+1$. For any $j \in [s]$ in line 24 of \cref{alg_stresm_attn_ds} with probability $p=\frac{\norm{\v_{n+1}}_2^2}{\mu + \norm{\v_{n+1}}_2^2}$, $\Mcal(j)$ gets updated to $(\k_{n+1}, \v_{n+1})$. Since we showed that $\mu = \sum_{i\in[n]}\norm{\v_i}_2^2$ we have:
    \[
    \Pr[\Mcal(j) = (\k_{n+1}, \v_{n+1})] = \frac{\norm{\v_{n+1}}_2^2}{\sum_{l \in [n+1]} \norm{\v_l}_2^2}.
    \]
    Moreover with probability $1-p = \frac{\mu}{\mu+\norm{\v_{n+1}}_2^2}$, $\Mcal(j)$ keeps its previous value.
    Using the inductive hypothesis we have that for every $i \in [n]$:
    \begin{align*}
        \Pr[\Mcal(j) = (\k_{i}, \v_{i})] &= \frac{\norm{\v_{i}}_2^2}{\sum_{l \in [n]} \norm{\v_l}_2^2} \cdot \frac{\sum_{l \in [n]} \norm{\v_l}_2^2}{\sum_{l \in [n+1]} \norm{\v_l}_2^2} = \frac{\norm{\v_{i}}_2^2}{\sum_{l \in [n+1]} \norm{\v_l}_2^2}.
    \end{align*}
    This completes the proof.
\end{proof}

\subsection{Softmax Normalizer (Partition Function) DS}
Here we delve into a detailed discussion of the \textsc{UpdateSoftmaxNormalizer} primitive. 
This primitive constructs and maintains a DS denoted by $\Dcal$, enabling accurate approximation of the partition function in the softmax denominator for any query.
A crucial requirement for the efficiency of this primitive is that the key tokens must be $(m, \delta)$-clusterable, as per \cref{def_clusterable}.
Our algorithm locates and stores a subsampled representation of each cluster in $\Dcal$ in a small memory. 
Particularly, to achieve sublinear memory complexity, instead of keeping all keys in each cluster which would require $O(n)$ memory space, we maintain only a random subset of $t$ samples from each cluster.

Initially, $\Dcal$ is an empty set. 
As new tokens in the stream are processed, new clusters get added to this set.
Each cluster is characterized by a representative point, which is the first key assigned to that cluster by our algorithm.
Throughout stream processing, we compute the distance between the new key token and each existing cluster. 
Here the distance to an existing cluster is defined as the distance to the aforementioned representative of the cluster.
If there is a cluster whose distance is less than $\delta$, then the token is assigned to the nearest cluster, and we update our random samples of keys from this cluster using reservoir sampling.
If the distance from all existing clusters is more than $\delta$, we introduce a new cluster in $\Dcal$, and the new key becomes the representative of this new cluster.
At any point in the stream, this algorithm identifies at most $m$ clusters if the keys so far are $(m, \delta)$-clusterable. If $m$ grows sublinearly in the stream length $n$, the memory and update time of our algorithm will be sublinear as well.
Formally, we prove that the following invariant holds:

\begin{lemma}[Correctness of \textsc{UpdateSoftmaxNormalizer}]\label{lem_update_softmax_norm}
    For any $\delta > 0$, any positive integer $t$, at any iteration $n$ of the stream in \cref{alg_stresm_attn_ds} the following properties are maintained.  $\Dcal$ is a set of $m$ items of the form $\Dcal = \left\{ (\x_i, \Scal_i, n_i) : i \in [m] \right\}$, where there exists a partition of keys into $m$ disjoint subsets $\Ccal_1, \Ccal_2, \ldots \Ccal_m \subseteq \{ \k_i \}_{i=1}^n$ satisfying $\bigcup_{j=1}^m \Ccal_j = \{ \k_i \}_{i=1}^n$ and $\Ccal_i \cap \Ccal_j = \emptyset$ for every $i \neq j$, such that for every $i \in [m]$:
    \setlist[enumerate]{leftmargin=6.5mm}
    \begin{enumerate}
        \item $\x_i \in \Ccal_i$,
        \item $n_i = |\Ccal_i|$,
        \item $\norm{\x_i - \k'}_2 \le \delta$ for every $\k' \in \Ccal_i$,
        \item  $\norm{\x_i - \x_j}_2 > \delta$ for every $i\neq j$,
        \item  $\Scal_i$ is a set of $t$ i.i.d. uniform samples from the set $\Ccal_i$.
    \end{enumerate}
   % \begin{itemize}
    %    \item $\Dcal$ is a set of $m$ items of the form $\Dcal = \left\{ (\x_i, \Scal_i, n_i) : i \in [m] \right\}$, where there exists a partition of keys into $m$ disjoint subsets $\Ccal_1, \Ccal_2, \ldots \Ccal_m \subseteq \{ \k_i \}_{i=1}^n$ satisfying $\bigcup_{j=1}^m \Ccal_j = \{ \k_i \}_{i=1}^n$ and $\Ccal_i \cap \Ccal_j = \emptyset$ for every $i \neq j$, such that for every $i \in [m]$: {\bf (1)} $\x_i \in \Ccal_i$, {\bf (2)} $n_i = |\Ccal_i|$, {\bf (3)} $\norm{\x_i - \k'}_2 \le \delta$ for every $\k' \in \Ccal_i$, {\bf (4)} $\norm{\x_i - \x_j}_2 > \delta$ for every $i\neq j$, and {\bf (5)} $\Scal_i$ is a set of $t$ i.i.d. uniform samples from the set $\Ccal_i$.
    %\end{itemize}
\end{lemma}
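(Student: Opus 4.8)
The plan is to prove all five invariants simultaneously by induction on the stream length $n$, taking the underlying partition $\Ccal_1, \ldots, \Ccal_m$ to be the one induced by the greedy routing the procedure performs: $\Ccal_i$ is the set of all keys assigned to the $i$-th cluster up to iteration $n$. For the base case $n=1$, the set $\Dcal$ is empty, so $\k_1$ falls through to the \textbf{else} branch and creates a single cluster with $\x_1 = \k_1$, $n_1 = 1$, and $\Scal_1$ equal to $t$ copies of $\k_1$. All five invariants are then immediate: invariant 4 holds vacuously, invariant 3 holds since $\norm{\x_1 - \k_1}_2 = 0 \le \delta$, and a uniform sample from the singleton $\{\k_1\}$ is deterministically $\k_1$.

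For the inductive step I assume the invariants after processing $n$ keys and split on the branch taken when $\k_{n+1}$ arrives. If $\norm{\k_{n+1} - \x_{i^*}}_2 \le \delta$, where $\x_{i^*}$ is the nearest representative, then the partition changes only by replacing $\Ccal_{i^*}$ with $\Ccal_{i^*} \cup \{\k_{n+1}\}$. Invariant 1 is preserved because $\x_{i^*}$ is untouched; invariant 2 because the count $n_{i^*}$ is incremented; invariant 3 follows from the inductive hypothesis for the former members together with the branch condition $\norm{\x_{i^*} - \k_{n+1}}_2 \le \delta$ for the newcomer; and invariant 4 is inherited verbatim since no representative moves. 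If instead the nearest representative lies farther than $\delta$, the \textbf{else} branch opens a fresh cluster $\Ccal_{m+1} = \{\k_{n+1}\}$ with $\x_{m+1} = \k_{n+1}$: here invariant 4 is exactly the branch condition (the minimum distance exceeds $\delta$, hence so does the distance to every representative), invariants 1--3 and 5 for the new singleton are verified as in the base case, and the untouched clusters retain their properties by the inductive hypothesis.

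The main obstacle is invariant 5, that $\Scal_{i^*}$ stays a set of $t$ i.i.d.\ uniform samples from the enlarged cluster. I would isolate a single slot $j \in [t]$ and argue that the update rule --- overwrite $\Scal_{i^*}(j)$ by $\k_{n+1}$ with probability $1/n_{i^*}$, where $n_{i^*} = |\Ccal_{i^*} \cup \{\k_{n+1}\}|$ after the increment --- is exactly one step of classical reservoir sampling \cite{vitter1985random}. Writing $k = |\Ccal_{i^*}|$ before the update so that $n_{i^*} = k+1$, the inductive hypothesis says slot $j$ held each former member with probability $1/k$; after the update the new key is held with probability $1/n_{i^*}$ and each former member with probability $\left(1 - \tfrac{1}{n_{i^*}}\right)\cdot \tfrac{1}{k} = \tfrac{1}{n_{i^*}}$, so slot $j$ is again uniform over the enlarged cluster. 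Since the $t$ slots are refreshed with independent coin flips and each slot's contents depend only on its own past draws, the slots remain mutually independent, giving the asserted i.i.d.\ uniform structure and completing the induction.
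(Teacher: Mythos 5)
Your proof is correct and follows essentially the same route as the paper's: induction on the stream length with a case split on whether the new key is within distance $\delta$ of its nearest representative, verifying invariants 1--4 directly from the branch conditions and invariant 5 via reservoir sampling. The only difference is that you spell out the reservoir-sampling probability calculation for invariant 5, which the paper leaves implicit by citing the standard result.
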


\begin{proof}
    The proof is by induction on the stream length $n$. 
    The base of induction trivially holds for $n=0$, where $\Dcal$ is an empty set.
    To prove the inductive step suppose that the inductive hypothesis holds for some $n$. 
    Specifically, suppose that $\Dcal$ is a set of $m$ items of the form $\Dcal = \left\{ (\x_i, \Scal_i, n_i) : i \in [m] \right\}$ and there exists a partition of keys into $m$ disjoint subsets $\Ccal_1, \Ccal_2, \ldots \Ccal_m \subseteq \{ \k_i \}_{i=1}^n$ as per in the lemma statement, such that for every $i \in [m]$: {\bf (1)} $\x_i \in \Ccal_i$, {\bf (2)} $n_i = |\Ccal_i|$, {\bf (3)} $\norm{\x_i - \k'}_2 \le \delta$ for every $\k' \in \Ccal_i$, {\bf (4)} $\norm{\x_i - \x_j}_2 > \delta$ for every $i\neq j$, and {\bf (5)} $\Scal_i$ is a set of $t$ i.i.d. uniform samples from the set $\Ccal_i$.
    Given this assumption, we prove that the inductive step also holds for after processing the $(n+1)$-th key in the stream $\k_{n+1}$.

    In the next iteration, specifically in line 12 of \textsc{UpdateSoftmaxNormalizer}, the algorithm finds the index $i^* \in [m]$ such that $\norm{\x_{i^*} - \k_{n+1}}_2$ is minimized. 
    Two cases arise:
    \paragraph{Case 1: $\norm{\x_{i^*} - \k_{n+1}}_2 \le \delta$.}
    In this case, the algorithm increments $n_{i^*} \gets n_{i^*}+1$ in line 14. Consider the new partitioning of the keys defined as $\Ccal_{i}' = \Ccal_{i}$ for $i \neq i^*$ and $\Ccal_{i^*}' = \Ccal_{i^*} \cup \{ \k_{n+1} \}$. 
    It follows from the inductive hypothesis that for every $i \in [m]$: {\bf (1)} $\x_i \in \Ccal_i'$, {\bf (2)} $n_i = |\Ccal_i'|$, {\bf (3)} $\norm{\x_i - \k'}_2 \le \delta$ for every $\k' \in \Ccal_i'$, and {\bf (4)} $\norm{\x_i - \x_j}_2 > \delta$ for every $i\neq j$ hold after the $n+1$-th iteration.
    Furthermore, since the algorithm does not alter the lists $\Scal_i$ for $i \neq i^*$, we have that {\bf (5)} $\Scal_i$ is a set of $t$ i.i.d. uniform samples from the set $\Ccal_i'$ for any $i \neq i^*$. 
    On the other hand, the algorithm in line 17 performs reservoir sampling on the set $\Scal_{i^*}$ with new element $\k_{n+1}$ which implies that $\Scal_{i^*}$ is a set of $t$ i.i.d. uniform samples from the set $\Ccal_{i^*}'$.
    This completes the inductive step in the first case.
    
    \paragraph{Case 2: $\norm{\x_{i^*} - \k_{n+1}}_2 > \delta$.}
    In this case, the algorithm adds a new element to $\Dcal$, thus, the updated set is $\Dcal' = \{ (\x_i, \Scal_i, n_i): i \in [m+1] \}$ with $\x_{m+1} = \k_{n+1}$ and $n_{m+1} = 1$.
    If we consider the new partitioning of keys to be $\Ccal_1, \Ccal_2, \ldots \Ccal_m, \Ccal_{m+1}$, where $\Ccal_{m+1} = \{ \k_{n+1} \}$, we can use the inductive hypothesis to deduce that for any $i \in [m+1]$: {\bf (1)} $\x_i \in \Ccal_i$, {\bf (2)} $n_i = |\Ccal_i|$, {\bf (3)} $\norm{\x_i - \k'}_2 \le \delta$ for every $\k' \in \Ccal_i$, and {\bf (4)} $\norm{\x_i - \x_j}_2 > \delta$ for every $i \neq j$ hold after the $n+1$-th iteration of the stream.
    Furthermore, $\Scal_{m+1}$ is defined to be a list of $t$ copies of $\k_{n+1}$, thus, {\bf (5)} $\Scal_i$ is a set of $t$ i.i.d. uniform samples from the set $\Ccal_i$ for any $i \in [m+1]$.
This completes the inductive step in this case and also concludes the proof of the lemma.
\end{proof}

\subsection{Streaming Attention: Main Theorem}
Now we are ready to analyze the end-to-end performance of \alg{} and prove the main theorem. 
We show that, given the data structures created throughout the stream and analyzed in \cref{lem_update_matrix_prod} and \cref{lem_update_softmax_norm}, the primitive \textsc{QueryStreamAttn} can efficiently output an accurate approximation to the streaming attention, satisfying \cref{eq:streaming_attention_exact}.

Our analysis unfolds in two steps.
First, we establish that the data structures created by \textsc{UpdateSoftmaxNormalizer} and \textsc{UpdateMatrixProduct} can be stored in small memory and updated very quickly if the sequence of keys is clusterable into a sublinear number of clusters.
Then we show that the \textsc{QueryStreamAttn} can use these data structures to produce an accurate attention output for any given query.
Our main result is as follows:
\begin{theorem}[Efficiency and Correctness of \cref{alg_stresm_attn_ds}]\label{main_theorem}
   For any $\delta, r, \varepsilon > 0$, any positive integers $n, d$, and any sequence of tokens $(\q_1, \k_1, \v_1), (\q_2, \k_2, \v_2), \ldots (\q_n, \k_n, \v_n)$ where $\q_i, \k_i, \v_i \in \RR^d$, suppose that the followings hold 
   \begin{itemize}
       \item $t = \Omega \left( \varepsilon^{-2} \cdot e^{2\delta \cdot r} \log n \right)$,
       \item $s = \Omega(\varepsilon^{-2} \cdot d)$,
       \item $\norm{\q_n}_2 \le r$.
   \end{itemize}
    Then, \alg{}~(\cref{alg_stresm_attn_ds}) at $n$-th step of the stream processing outputs a vector $\z_n \in \RR^d$ that satisfies \cref{eq:streaming_attention_exact} with probability at least $0.99$. 
   Furthermore, if the keys $\k_1, \k_2, \ldots \k_n$ are $(m, \delta)$-clusterable as per \cref{def_clusterable}, then both the total memory of the algorithm and its runtime during the $n$-th iteration is bounded by $O(d \cdot (mt+s))$.
\end{theorem}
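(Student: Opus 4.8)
\emph{Setup.} The plan is to analyze the two estimators formed in \textsc{QueryStreamAttn} — the numerator $\z$ and the denominator $\tau$ — separately, show each concentrates around its exact counterpart, and then combine them by a quotient-perturbation argument. Write $\a := \softmax(\K_n\cdot\q_n)$ and $\tau^\star := \sum_{i\in[n]}\exp(\langle \k_i,\q_n\rangle)$, so that $\exp(\K_n\q_n)=\tau^\star\a$, the exact output is $\Attn(\q_n,\K_n,\V_n)=\a^\top\V_n$, and the exact numerator is $\exp(\K_n\q_n)^\top\V_n=\tau^\star\,\a^\top\V_n$. I will also use $\|\a^\top\V_n\|_2\le\|\a\|_2\|\V_n\|_{op}$.

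\emph{Numerator.} By \cref{lem_update_matrix_prod}, the $s$ entries of $\Mcal$ are i.i.d.\ value-norm samples, so $\z=\tfrac1s\sum_j \tfrac{\mu}{\|\v^{(j)}\|_2^2}\exp(\langle\q_n,\k^{(j)}\rangle)\v^{(j)}$ is an unbiased estimator of $\tau^\star\a^\top\V_n$. A direct second-moment computation, using $\mu=\|\V_n\|_F^2$ and $\sum_i\exp(2\langle\q_n,\k_i\rangle)=(\tau^\star)^2\|\a\|_2^2$, gives $\E\|\z-\tau^\star\a^\top\V_n\|_2^2\le \tfrac1s\|\V_n\|_F^2(\tau^\star)^2\|\a\|_2^2$. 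Combining the stable-rank bound $\|\V_n\|_F^2\le d\,\|\V_n\|_{op}^2$ with Chebyshev, the choice $s=\Omega(\varepsilon^{-2}d)$ yields $\|\z-\tau^\star\a^\top\V_n\|_2\le \varepsilon\,\tau^\star\|\a\|_2\|\V_n\|_{op}$ with probability $\ge 0.995$. This is the routine approximate-matrix-multiplication step.

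\emph{Denominator.} By \cref{lem_update_softmax_norm}, $\Dcal=\{(\x_i,\Scal_i,n_i)\}_{i\in[m]}$ where $\Ccal_1,\dots,\Ccal_m$ partition the keys, $n_i=|\Ccal_i|$, every $\k\in\Ccal_i$ satisfies $\|\k-\x_i\|_2\le\delta$, and $\Scal_i$ is $t$ uniform samples from $\Ccal_i$; hence $\tau=\sum_i \tfrac{n_i}{t}\sum_{\k\in\Scal_i}\exp(\langle\q_n,\k\rangle)$ is unbiased for $\tau^\star$, with its $i$-th term $Y_i$ unbiased for $\mu_i:=\sum_{\k\in\Ccal_i}\exp(\langle\q_n,\k\rangle)$. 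The crux is controlling each $Y_i$ via clusterability: for $\k\in\Ccal_i$, Cauchy--Schwarz with $\|\q_n\|_2\le r$ and $\|\k-\x_i\|_2\le\delta$ gives $e^{-\delta r}\exp(\langle\q_n,\x_i\rangle)\le \exp(\langle\q_n,\k\rangle)\le e^{\delta r}\exp(\langle\q_n,\x_i\rangle)$, so all in-cluster exponentials lie in a multiplicative $e^{2\delta r}$ window around their mean. This bounds the per-sample second moment by $e^{2\delta r}$ times the squared mean, and a Bernstein bound then shows that $t=\Omega(\varepsilon^{-2}e^{2\delta r}\log n)$ forces $|Y_i-\mu_i|\le\varepsilon\mu_i$ for every cluster simultaneously — the $\log n$ absorbing a union bound over the at most $m\le n$ clusters — whence $|\tau-\tau^\star|\le\sum_i|Y_i-\mu_i|\le\varepsilon\tau^\star$ with probability $\ge 0.995$. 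Pinning the in-cluster exponentials to this $e^{2\delta r}$ window is the main obstacle, and it is exactly what the hypotheses on $t$ and $\|\q_n\|_2\le r$ are designed to supply.

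\emph{Combining and efficiency.} On the intersection of the two good events (probability $\ge 0.99$ by a union bound), I write $\tfrac{\z}{\tau}-\a^\top\V_n=\tfrac{\z-\tau^\star\a^\top\V_n}{\tau}+\a^\top\V_n\bigl(\tfrac{\tau^\star}{\tau}-1\bigr)$, bound the first term using $\tau\ge(1-\varepsilon)\tau^\star$ and the numerator guarantee, and the second using $\|\a^\top\V_n\|_2\le\|\a\|_2\|\V_n\|_{op}$; after rescaling $\varepsilon$ by a constant this gives the claimed error $\varepsilon\|\softmax(\K_n\q_n)\|_2\|\V_n\|_{op}$. For the resource bounds, I first argue that when the keys are $(m,\delta)$-clusterable the algorithm opens at most $m$ clusters: the stored representatives are pairwise more than $\delta$ apart (invariant~4 of \cref{lem_update_softmax_norm}), whereas any two points inside a common ground-truth cluster of diameter $\le\delta$ are within $\delta$, so each ground-truth cluster hosts at most one representative and $|\Dcal|\le m$. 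Given this, storing $\Dcal$ costs $O(mtd)$ and $\Mcal$ costs $O(sd)$, while the dominant per-step cost — nearest-representative search, the reservoir updates, and the two sums in \textsc{QueryStreamAttn} — is $O(d(mt+s))$, completing the proof.
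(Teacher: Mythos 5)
Your proposal is correct and follows essentially the same route as the paper: cluster-wise multiplicative concentration of $\tau$ via the $e^{2\delta r}$ boundedness supplied by clusterability and $\|\q_n\|_2\le r$, the standard $\ell_2$-sampling matrix-product bound for $\z$, a union bound plus quotient/triangle-inequality combination, and the pigeonhole argument that pairwise-$\delta$-separated representatives force $|\Dcal|\le m$. The only differences are that you spell out the second-moment/Chebyshev computation the paper delegates to \cite{drineas2001fast} and make the quotient-perturbation step explicit where the paper leaves it implicit.
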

\begin{proof}
We start the correctness proof by observing that all preconditions of \cref{lem_update_softmax_norm} are satisfied, allowing us to invoke this lemma.
Let the partition of keys into disjoint subsets be denoted by $\Ccal_1, \Ccal_2, \ldots \Ccal_{m'} \subseteq \{ \k_i \}_{i=1}^n$ satisfying $\bigcup_{j=1}^{m'} \Ccal_j = \{ \k_i \}_{i=1}^n$ and $\Ccal_i \cap \Ccal_j = \emptyset$ for every $i \neq j$ as per \cref{lem_update_softmax_norm} for some positive integer $m'$.
Rewriting the partition function in the attention denominator gives:
\[
\sum_{j\in[n]} \exp(\langle \k_j, \q_n \rangle) = \sum_{i \in [m']} \sum_{\k' \in \Ccal_i} \exp(\langle \k', \q_n \rangle).
\]
Now by property {\bf (3)} in \cref{lem_update_softmax_norm} and triangle inequality, for every $i \in [m']$ and every $\k', \k'' \in \Ccal_i$ we have:
\[
\norm{\k' - \k''}_2 \le \norm{\k' - \x_i}_2 + \norm{\k'' - \x_i}_2 \le 2\delta.
\]
Therefore, using the precondition of the theorem on $\norm{\q_n}_2 \le r$ we have 
\[
\exp(\langle \k', \q_n \rangle) / \exp(\langle \k'', \q_n \rangle) \le e^{2\delta \cdot r}.
\]
Using the above inequality and the assumption in the theorem statement regarding $t = \Omega \left( \varepsilon^{-2} \cdot e^{2\delta \cdot r} \log n \right)$ combined with the properties {\bf (2)} and {\bf (5)} proved in \cref{lem_update_softmax_norm}, we can invoke Chernoff-Hoeffding inequality (see e.g., \cite{mcdiarmid1998concentration}) along with union bound to conclude that the following holds simultaneously for all $i \in [m']$ with probability at least $1 - \frac{1}{\poly(n)}$:
\[
\frac{n_i}{t} \cdot \sum_{\k' \in \Scal_i} \exp(\langle \q_n, \k' \rangle) \in (1 \pm \varepsilon/3) \cdot \sum_{\k' \in \Ccal_i} \exp(\langle \k', \q_n \rangle)
\]
Since the terms above are positive, by summing up the given inequality for all $i \in [m']$, we find that the quantity $\tau$ computed in line 27 of \cref{alg_stresm_attn_ds} satisfies the following:
\begin{equation}\label{eq:tau_approx_bound}
    \Pr \left[ \tau \in (1 \pm \varepsilon/3) \sum_{j\in[n]} \exp(\langle \k_j, \q_n \rangle) \right] \ge 0.995
\end{equation}

Next, we invoke \cref{lem_update_matrix_prod} to derive an error bound on the approximate matrix-vector product between the softmax vector and the matrix of values $\V_n$.
By leveraging well-established techniques in approximate matrix products, such as the standard result from \cite{drineas2001fast}, and using the conclusion of \cref{lem_update_matrix_prod} regarding $\Mcal$ as a list of $s = \Omega(\varepsilon^{-2} \cdot d)$ i.i.d. sample from the probability distribution $\Pr[\Mcal(j) = (\k_i, \v_i)] = \frac{\norm{\v_i}_2^2}{\sum_{l \in [n]} \norm{\v_l}_2^2}$ for $i \in [n]$ for $i \in [n]$ and $j \in [s]$, we have that vector $\z$ computed in line 26 of \cref{alg_stresm_attn_ds} satisfies the following inequality with a probability of at least $0.995$:
\begin{align}
&\norm{\z - \exp(\K_n \cdot \q_n)^\top \cdot \V_n}_2 \le \frac{\varepsilon}{3} \norm{\exp(\K_n \cdot \q_n)}_2 \normnlr{\V_n}{op} \label{eq:matrix_prod_error_bound}
\end{align}
Now by combining inequalities in \cref{eq:tau_approx_bound} and \cref{eq:matrix_prod_error_bound} using union bound and triangle inequality we find that the output of \cref{alg_stresm_attn_ds} computed in line 28 as $\z / \tau$ satisfies the following with probability at least $0.99$
\begin{align*}
    &\norm{\z / \tau - \softmax(\K_n \cdot \q_n)^\top \cdot \V_n}_2 \le \varepsilon \norm{\softmax(\K_n \cdot \q_n)}_2 \normnlr{\V_n}{op}.
\end{align*}
This completes the correctness proof.

\begin{figure*}[t]
    \centering
    \includegraphics[width=\textwidth]{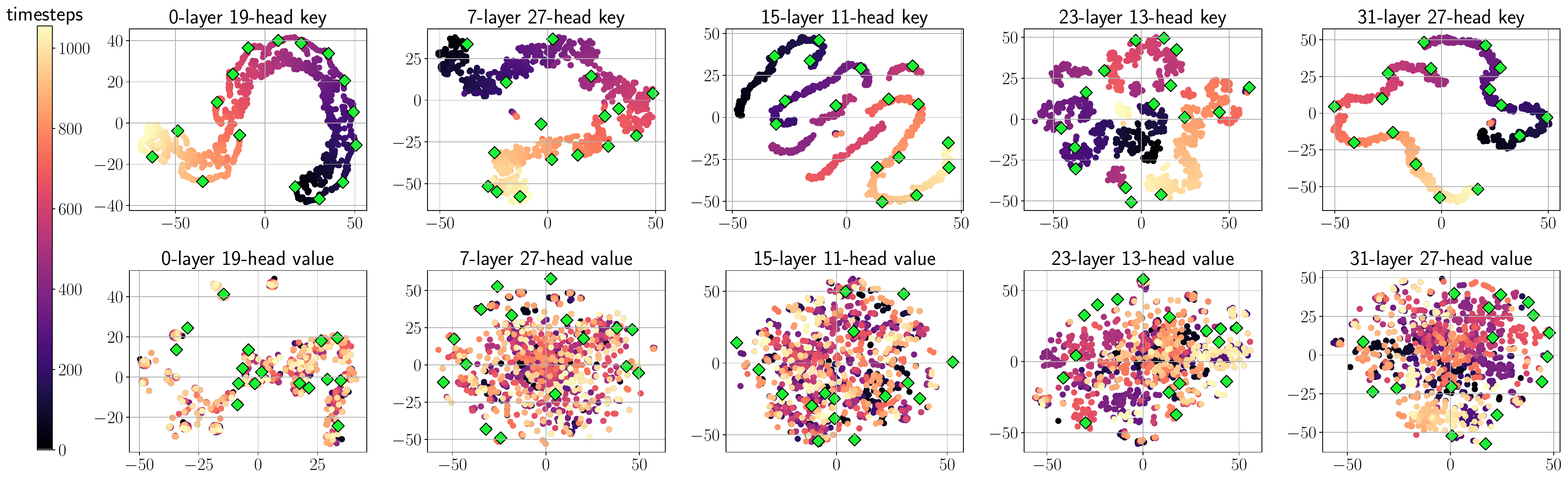}
    \vspace{-0.25in}
    \caption{A t-SNE plot of cached keys (first row) and values (second row) embeddings over $1024$ timesteps from Llama2-7B using MT Bench dataset. We pick $\ell$-layer where $\ell \in \{0,7,15,23,31\}$ and head IDs are chosen uniformly at random. Key embeddings are more clusterable than value ones. The green dots represent the centers from the greedy k-center algorithm~\cite{dyer1985simple} where k=$16$.}
    \label{fig-kv-emb}
\end{figure*}

\paragraph{Memory and Runtime.}
First, note that the memory requirement for storing the list $\Mcal$ in \cref{alg_stresm_attn_ds} is $O(s d)$ because it contains $s$ pairs of $d$-dimensional vectors.
Next, to bound the memory requirement for storing $\Dcal$ we need to bound the size of this set which we denoted by $m'$.
According to properties {\bf (1)} and {\bf (4)} in \cref{lem_update_softmax_norm}, for every $i \in [m']$ there exist $\x_i \in \{ \k_1, \k_2, \ldots \k_n \}$ such that $\norm{\x_i - \x_j}_2 > \delta$ for $i \neq j$.
Given the assumption in the theorem statement that keys are $(m, \delta)$-clusterable, by the definition of clusterability in \cref{def_clusterable} along with the pigeonhole principle, we must have $m' \le m$.
Therefore storing $\Dcal$ will require $O(m' t d) = O(m t d)$ because it is a set of $m'$ elements, and each element of this set is a list of $t$ vectors in dimension $d$.

Three major operations dominate the runtime of the $n$-th iteration. 
Firstly, executing \textsc{UpdateSoftmaxNormalizer} requires computing $m'$ distances in line 12 that takes $O(md)$ time. 
Additionally, the for loop in line 16 takes $O(td)$ time. 
Secondly, \textsc{UpdateMatrixProduct} has a runtime bounded by $O(sd)$.
Thirdly, running \textsc{QueryStreamAttn} involves $O(sd)$ operations in line 26 and $O(m'td) = O(mtd)$ operations in line 27. 
As a result, the total runtime of \cref{alg_stresm_attn_ds} in $n$-th iteration is $O(m t d + s d)$.
\end{proof}

\cref{main_theorem} demonstrates that if the keys can be clustered into some sublinear number $m = n^{1 - \Omega(1)}$ of clusters with diameters at most $\delta$, and the queries have bounded $\ell_2$ norms of at most $r$ such that the product of the cluster diameter and maximum $\ell_2$ norm of queries is bounded by $\delta r = o(\log n)$, then \cref{alg_stresm_attn_ds} operates with sublinear $O\left( \varepsilon^{-2} \cdot m d n^{o(1)} \right) = O\left( \varepsilon^{-2} \cdot d n^{1-\Omega(1)} \right)$ memory and runtime.
We summarize this in the following corollary:

\begin{corollary}\label{main_cor}
    Suppose the preconditions of \cref{main_theorem} hold. If the diameter of key token clusters $\delta$ and the maximum $\ell_2$ norm of queries $r$ satisfy $\delta r = o(\log n)$, then the total memory and runtime of \cref{main_theorem} are bounded by $O\left( \varepsilon^{-2} \cdot d m n^{o(1)} \right)$.
    Moreover, if the number of key token clusters $m$ grows as a sublinear function of $n$, i.e., as $m = n^{1 - \Omega(1)}$, then the memory and runtime are bounded by $O\left( \varepsilon^{-2} \cdot d n^{1 - \Omega(1)} \right)$.
\end{corollary}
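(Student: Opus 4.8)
The plan is to derive \cref{main_cor} directly from the memory and runtime bound $O(d \cdot (mt + s))$ already proved in \cref{main_theorem}, since the correctness guarantee and the admissible ranges for $t$ and $s$ carry over unchanged. No fresh probabilistic argument is required; the task reduces entirely to substituting the prescribed parameter values and simplifying the resulting expression under the hypothesis $\delta r = o(\log n)$.

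First I would instantiate $t = \Theta(\varepsilon^{-2} e^{2\delta r} \log n)$ and $s = \Theta(\varepsilon^{-2} d)$ in $O(d(mt+s))$, which yields $O(\varepsilon^{-2} d (m \, e^{2\delta r} \log n + d))$. The heart of the argument is controlling the overhead factor $e^{2\delta r}\log n$. From $\delta r = o(\log n)$ we get $2\delta r = o(\log n)$, hence $e^{2\delta r} = e^{o(\log n)} = n^{o(1)}$; since $\log n = n^{o(1)}$ as well and the product of two $n^{o(1)}$ quantities is again $n^{o(1)}$, the whole factor is $n^{o(1)}$. The additive $d$ coming from the $s$ term is dominated: treating $d$ as a constant in $n$ (so $d \le \log n \le m \, n^{o(1)}$ for large $n$, using $m \ge 1$) gives $\varepsilon^{-2} d^2 = O(\varepsilon^{-2} d \, m \, n^{o(1)})$. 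Collapsing both terms proves the first bound $O(\varepsilon^{-2} d \, m \, n^{o(1)})$.

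For the second claim I would substitute $m = n^{1-\Omega(1)}$ into this bound, so that $m \cdot n^{o(1)} = n^{1-\Omega(1)} \cdot n^{o(1)}$ remains $n^{1-\Omega(1)}$, because subtracting an arbitrarily small $o(1)$ from the exponent preserves the constant-order gap below $1$. This delivers the final $O(\varepsilon^{-2} d \, n^{1-\Omega(1)})$ bound on both memory and runtime. The only point demanding care---and the sole potential obstacle---is the asymptotic bookkeeping: making precise that $e^{o(\log n)}$ is genuinely $n^{o(1)}$, that $n^{o(1)}$ factors absorb polylogarithmic overhead and one another, and that such factors do not erode the $\Omega(1)$ savings in the exponent. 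I would spell these simplifications out explicitly so that the genuine sublinear gain stays cleanly separated from the $n^{o(1)}$ overhead.
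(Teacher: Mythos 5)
Your proposal is correct and follows essentially the same route as the paper: substitute the prescribed $t$ and $s$ into the $O(d(mt+s))$ bound from Theorem~\ref{main_theorem}, use $\delta r = o(\log n)$ to get $e^{2\delta r}\log n = n^{o(1)}$, and absorb the $n^{o(1)}$ overhead into the $n^{1-\Omega(1)}$ exponent. Your explicit handling of the additive $\varepsilon^{-2}d^2$ term (treating $d$ as constant in $n$) is the same implicit simplification the paper makes, just spelled out.
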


\begin{table*}[t]
\scalebox{0.86}{
% \begin{center}
\begin{tabular}{@{}lcccccc@{}}
\toprule
               % & \multicolumn{6}{c}{Context length} \\
 & \multicolumn{2}{c}{$n=$ 5k}               & \multicolumn{2}{c}{$n=$ 7k}               & \multicolumn{2}{c}{$n=$ 9k} \\ \midrule
Algorithm & Cache Size (GB)               & Accuracy & Cache Size (GB) & Accuracy & Cache Size (GB) & Accuracy \\ \midrule
Exact          & 2.351                     & 0.98     & 3.488                     & 1.0      & 4.613       & 0.68     \\
Sink~\cite{xiao2023efficient} & 1.511 (35\% $\downarrow$) & 0.56     & 2.012 (42\% $\downarrow$) & 0.56    &   2.262 (50\% $\downarrow$) &  0.38        \\
H2O~\cite{zhang2023h} & 1.511 (35\% $\downarrow$) & 0.66     & 2.012 (42\% $\downarrow$) & 0.58    & 2.262 (50\% $\downarrow$) & 0.38 \\
\alg~(this work)          & 1.512 (35\% $\downarrow$) & {\bf 0.86}  & 2.012 (42\% $\downarrow$) & {\bf 0.66}  & 2.262 (50\% $\downarrow$) & {\bf 0.44} \\
\bottomrule
\end{tabular}
% \end{center}
}
\caption{Results on accuracy of line retrieval from LongEval~\cite{li2023long} dataset with context length 5k-9k. Under the sublinear budgets on cache size, the proposed algorithm based on k-center algorithm outperforms other methods over all sequence lengths.} \label{table-line-retrieval}
\end{table*}

\section{Experiments}\label{sec-exp}

In this section, we report the empirical results of the proposed algorithm with memory footprint reduction and performance on question-answering benchmark datasets. For all experiments, we use a single NVIDIA RTX6000 GPU with 48 GB VRAM.

\subsection{Ablation Study on Clusterability}\label{exp-key-embedding}
We first demonstrate that cached embeddings over long token generations are indeed well clusterable. To this end, we collect key and value embeddings from Llama-2-7B~\cite{touvron2023llama} with MT Bench dataset~\cite{zheng2023judging} while the model generates a sequence of $1024$ tokens. We then visualize the embeddings using t-SNE~\cite{van2008visualizing} across various layers and heads, identifying cluster center points through the greedy k-center algorithm~\cite{dyer1985simple}.

As illustrated in \cref{fig-kv-emb}, our observations reveal that key embeddings (first row) exhibit a higher degree of clusterability compared to value embeddings across all randomly selected layers and heads. Furthermore, we note that the cluster centers (indicated by green dots) corresponding to the key embeddings are evenly distributed across the entire embedding space. In particular, the key embeddings demonstrate significant dispersion across different time steps, and their cluster centers are distributed over the entire embedding space.

This behavior comes from the use of Rotary Position Embedding (RoPE)~\cite{su2024roformer} in Llama-2-type models which introduces rotational transformations to both query and key embeddings based on their relative positions. Hence, the key embeddings appear to be well-separated in their projected space, while the values show an unstructured and random distribution within their space. These findings serve as a motivation for the development of an efficient key-value (KV) compression algorithm that leverages the clustering properties of key embeddings.

\subsection{End-to-end Text Generation}

We next evaluate our proposed algorithm on long-context line retrieval task in LongEval~\cite{li2023long}\footnote{\url{https://github.com/DachengLi1/LongChat/blob/longeval}} benchmark. The task involves long-context line retrieval from extensive documents, each comprising multiple lines, complete with line numbers and topics. The objective is to precisely retrieve a specified number of lines corresponding to a target topic. We vary the number of lines, representing the number of targets, to 200, 300, and 400 and they correspond to sequence lengths of $n=$5,000, 7,000, and 9,000, respectively. Each dataset contains 50 distinct questions, and we systematically extract the number from the generated answers and compute accuracies. The answers are generated employing the longchat-7B model\footnote{\url{https://huggingface.co/lmsys/longchat-7b-v1.5-32k}}, which is a fine-tuned version of the Llama-2-7B model with long-range context length.

We compare our method to two KV cache compression algorithms; H2O~\cite{zhang2023h}, which retains cached tokens with high cumulative attention scores, and Attention Sink~\cite{xiao2023efficient}, a method that deterministically selects some initial and recent tokens. Specifically, both of these prior works have highlighted the significance of recent token embeddings in generating meaningful responses. To leverage this insight, we integrate it with our clustering approach. More precisely, our strategy consistently retains the most recent $r$ token embeddings, in addition to $k$ centers selected from the remaining tokens. In a streaming context, this strategy is often referred to as a {\it sliding window}. We apply the greedy k-center clustering algorithm once to compress the entire KV caches. To make comparisons fair, we set cache memory budgets of all algorithms identical (i.e., $r+k$), which scales sublinearly with the context length denoted as $n$.

The results are reported in \cref{table-line-retrieval}. We observe that our clustering-based method consistently outperforms other algorithms across all sequence lengths. For instance, we achieve an accuracy of 44\% while utilizing only half of the cached KV embeddgins with a length of 9k tokens, whereas both H2O and Sink can achieve accuracies 10\% lower. This finding suggests that maintaining the embedding information holds greater significance in sustaining the performance of LLMs compared to attention scores and positional information.

\section{Conclusion}
In this work, we develop \alg, an efficient KV cache compression algorithm via stream clustering.
Our motivation is that cached keys are well clusterable in their embedding space and we apply a greedy-type clustering algorithm to find the most representative embeddings. 
Under assumptions on bounded query norm and clusterability, we analyze that our algorithm can guarantee a spectral error bound with sublinear time and memory. 
We further integrate keeping recent tokens to the proposed clustering approach. For zero-shot line retrieval tasks, our algorithm outperforms other KV cache compression algorithms with the same memory budget.

\bibliographystyle{icml2024}
\bibliography{references}

\end{document}